\documentclass[letterpaper, 10 pt, conference]{ieeeconf}
\IEEEoverridecommandlockouts                              
\usepackage{amsmath,bm,amsfonts,cases}
\usepackage{shortcuts,subfiles,balance}
\usepackage{booktabs,xcolor}
\usepackage{graphicx}
\usepackage{float}
\usepackage{url}
\usepackage{optidef}    
\usepackage{mathrsfs}
\usepackage[hidelinks]{hyperref} 
\usepackage{multirow}
\usepackage{colortbl}
\usepackage{siunitx}    

\usepackage{algorithm}
\usepackage{algpseudocode}

\usepackage[symbol]{footmisc}

{
    \newtheorem{proposition}{Proposition}
}

\let\labelindent\relax
\usepackage{enumitem}

\usepackage{tabularx}
\usepackage[format=plain,font=footnotesize,labelsep=colon]{caption}

\title{\LARGE \bf Generate, Track, Improve: Perceptive Multi-Skill Humanoid Locomotion with RL-Fine-Tuned Motion Generators}
\author{Zachary Olkin, William D. Compton, Aaron D. Ames$^{}$
\thanks{The authors are with the Department of Control and Dynamical Systems and the Department of Mechanical and Civil Engineering at the California Institute of Technology. 
This research is supported by the Technology Innovation Institute (TII).}
}

\begin{document}
\bstctlcite{IEEEexample:BSTcontrol}
\twocolumn[{%
  \renewcommand\twocolumn[1][]{#1}%
  \maketitle
  \begin{center}
    \includegraphics[width=1.0\linewidth]{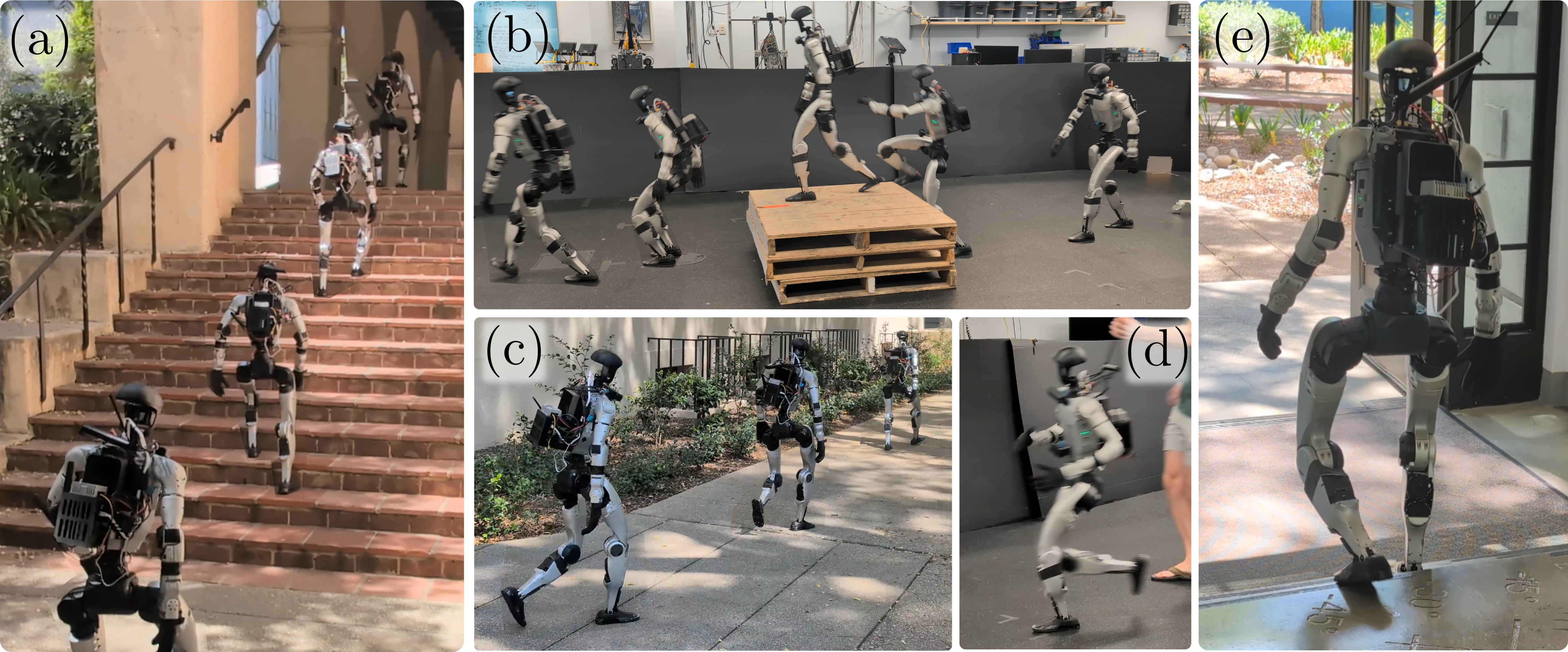}
    \captionof{figure}{Demonstration of our perceptive control policies working across multiple terrains in indoor and outdoor settings. (a) Ascending a 15 step real world staircase including transitions into and out of the stair climb. (b) Jumping onto a box, walking across it, and jumping off the box. (c) Outdoor walking locomotion including turning onto a path. (d) Running across an indoor space. (e) Descending a real world stair case. }
    \label{fig:hero}
  \end{center}
}]
{\renewcommand{\thefootnote}{}%
 \footnotetext{The authors are with the Department of Control and Dynamical Systems at the California Institute of Technology. This research is supported by the Technology Innovation Institute (TII).}}
 
\begin{abstract}
General purpose humanoids require locomotion controllers that are multi-skill, perceptive, dynamic, and robust enough to go anywhere humans can. In this work, we present a two layer locomotion architecture: (1) a perceptive flow matching motion generator plans whole body trajectories from raw depth images while a (2) perceptive tracking policy trained with control-guided RL follows these motions. Both policies are trained on a library of terrain consistent motion clips created with dynamically optimized human data which yields both accurate velocity tracking and terrain consistent references. Our central contribution is a simple yet effective off-policy RL fine tuning loop that improves the motion generator. A structured search method is used with the generator to gather data for advantage weighted regression. This off-policy loop is much more sample efficient than on-policy residual fine tuning and improves terrain consistency on unseen geometries and skill compositions. We find that successful terrain traversals increased by up to 25 percentage points and skill selection improved by up to 80 percentage points. By using raw depth images to perceive the environment no odometry or height maps are needed, and outdoor deployment is easy. With two cameras, the policy can see terrain coming from further away and adjust its velocity regardless of the commanded speed so it can traverse the terrain. A single policy pair enables a Unitree G1 humanoid to walk, run, stand, jump on and off of boxes, and traverse stairs in outdoor environments. Project page: \textcolor{blue}{\url{https://zolkin1.github.io/generate-track-improve/}}.
\end{abstract}

\section{Introduction}
Between the diversity of skills required and the inherently perceptive nature of real world locomotion, humanoid robots have yet to be seen reliably operating in the real world. 
In recent years many humanoid locomotion modalities, such as walking \cite{radosavovic_real-world_2024, li_clf-rl_2026, siekmann_sim--real_2021}, running \cite{olkin_chasing_2026, li_reinforcement_2024}, parkour \cite{wu_perceptive_2026, xu_parc_2025, zhuang_humanoid_2024}, stair traversals \cite{zhang_rpl_2026, compton_terrain_2026, long_learning_2024}, and much more have been studied. Yet, even with this explosion in capability, there are many challenges in humanoid whole body control and locomotion still yet to be solved. For example, many terrain aware locomotion policies \cite{zhang_rpl_2026, compton_terrain_2026, wang_beamdojo_2025, dai_walk_2026, crismariu_march_2026} are not built to be extensible to general whole body control or multi-speed (walking, running, etc...) locomotion. This makes it difficult for those robots to go anywhere in the real world at a commanded speed or extended to interact with the environment like opening doors or pressing elevator buttons. On the other hand, multi-purpose whole body controllers \cite{luo_sonic_2026, yin_unitracker_2025} have emerged that can track many motions, but these are lacking the perception and terrain aware motions needed to traverse the real world. In this work we bridge these ideas by making a pair of perceptive policies that together allow the robot to generate terrain aware references and track those references in real time. 

To tackle this challenge, we leverage a layered architecture with a motion generation layer interfacing into a tracking layer where both policies are perceptive. Then, we also present a simple but highly effective RL fine tuning loop that can improve the mode selection and terrain consistency of the motion generator policy. This generator and tracker paradigm is not the only way to create a multi-skill and perceptive policy; others have used multi-expert distillation \cite{rudin_parkour_2025, wu_perceptive_2026}, or created generators that produce both kinematic trajectories and actions  \cite{huang_diffuse-cloc_2025}. In comparison to these other methods, the generator and tracker paradigm leads to modularity through a layered architecture and the ability to easily add additional skills, provided a general enough tracking controller. This is ideal for scaling up generality in humanoids. Further, this provides a natural computation split, allowing a relatively light weight tracking controller to be used at a high speed while the larger motion generator can be queried at a slower speed.

\subsection{Related Works}
Humanoid whole body and locomotion control uses RL methods. Controllers are generally trained in simulation \cite{rudin_learning_2022} with PPO \cite{schulman_proximal_2017}. Dynamic and human like motions are commonly achieved with ``mimic" style approaches \cite{peng_deepmimic_2018, liao_beyondmimic_2025, sleiman_zest_2026, luo_sonic_2026}. These methods offer great performance, clean results, and minimal reward tuning at the mere cost of obtaining a reference trajectory. Human data \cite{yang_omniretarget_2025, liao_beyondmimic_2025, xie_kungfubot_2025, he_asap_2025, ji_exbody2_2025}, reduced order models \cite{lee_integrating_2024, li_clf-rl_2026, green_learning_2021, batke_optimizing_2022, compton_terrain_2026}, optimized dynamic trajectories \cite{li_clf-rl_2026, olkin_chasing_2026, esteban_shooting_2026, li_reinforcement_2021, liu_opt2skill_2025}, and animation \cite{grandia_design_2024, sleiman_zest_2026} have all been used to synthesize reference trajectories. Many of these mimic style approaches create controllers that can do motion ``play back" but still lack more general purpose skills or an interface for autonomy. In locomotion, achieving this autonomy interface is commonly done through velocity conditioning \cite{li_reinforcement_2024}, periodic orbit creation \cite{olkin_chasing_2026, li_clf-rl_2026}, or reduced order models that can be queried in training for the desired motion at any point in time \cite{lee_integrating_2024, compton_terrain_2026}; each of which allow the robot to track a commanded velocity, which is something a higher level navigation policy might output.

Even with some methods using perception to traverse terrain, perceptive locomotion is still an evolving research direction \cite{gu_evolution_2026}. Perceptive humanoid locomotion has a few major paradigms such as use of a height field \cite{zhang_learning_2026, wang_beamdojo_2025, long_learning_2024, sun_learning_2025}, or using direct sensor measurements \cite{wu_perceptive_2026, zhang_rpl_2026, compton_terrain_2026, rudin_parkour_2025}. Use of a height field creates difficulty in deployment as it requires a height scanning package and odometry in general \cite{miki_elevation_2022}. Using the raw depth camera inputs removes this blocker at the expense of needing to simulate the sensor and requiring that the network outputs are observable given the view of the cameras.

The generator and tracker paradigm is gaining momentum in the humanoid control space, but is still a relatively nascent architecture. SONIC \cite{luo_sonic_2026} achieved this paradigm with a kinematic motion generator paired with their universal tracking policy. Critically, their generator was not perceptive and designed for just flat ground traversal. PARC \cite{xu_parc_2025} proposes a perceptive tracker generator pipeline that can generate novel motions from a finite starting set. The pipeline is not shown on hardware, being restricted to simulated characters, thus leaving its applicability to real robots in question. Even so, the resulting motions are impressive, but the method requires kinematic and dynamic correction steps which in conjunction with the diffusion policy training takes one month for three iterations of training. This is much slower than our proposed method. The authors of \cite{zhang_learning_2026} demonstrate a perceptive generator and tracker for humanoids, but their method requires odometry and a height scan, does not show accurate velocity tracking or outdoor experiments and requires fine tuning of the tracker. Our method uses raw depth cameras, no odometry, shows accurate velocity tracking, and critically we provide a pipeline for improving the generator with RL which fixes issues that tracker fine tuning can't fix.

The idea of using RL for fine tuning flow matching and diffusion policies has been used in VLAs \cite{intelligence__06_2025, chen__textttrl_2025} and for smaller behavior cloning (BC) policies doing manipulation/loco-manipulation on legged and wheeled systems \cite{gu_refine-dp_2026, ankile_residual_2025}. In \cite{gu_refine-dp_2026} the authors use an on-policy algorithm, DPPO \cite{ren_diffusion_2024}, and a reduced action space, which does not directly control the legs, for the diffusion policy outputs. Separately, there is a group of fine tuning methods that freeze the generator and either learn residual actions \cite{ankile_residual_2025} or steer the latent noise \cite{wagenmaker_steering_2025, su_rfs_2026} with \cite{su_rfs_2026} doing both. In \cite{ankile_residual_2025} they use off-policy RL to learn residual actions and apply this to manipulation instead of dynamic locomotion. These methods target manipulation policies with low dimensional action spaces. Our setting differs both in its application to dynamic locomotion but also in that we adjust the original generator's weights.

\subsection{Contributions}
\textbf{Full Architecture:} In this paper we present Generate, Track, Improve: a full pipeline for the creation of a tracker and generator policy pair that enables perceptive humanoid locomotion across multiple terrains. The tracker policy is trained with CLF-RL in simulation to enable precise and robust tracking of the dynamic motions \cite{olkin_chasing_2026}. The generator is a flow matching \cite{lipman_flow_2023} transformer conditioned on raw depth images from multiple cameras and velocity commands which generates a whole body trajectory every 0.24 s and passes this to the tracker. Together, these policies enable a humanoid robot to traverse real world environments with skills such as running, walking, jumping on and off of boxes and traversing a variety of stairs.

\textbf{Data Pipeline:} The data used to build the CLF-RL references comes originally from human motion capture. We utilize an optimization step in the data processing pipeline to create hundreds of dynamically feasible, velocity accurate, and periodic motions for the robot to track. The optimization allows us to re-target the single references we use for terrain interaction into many different references across multiple terrain geometries. Beyond these benefits, we pair these optimized motions with the generative model Motion Bricks \cite{wang_motionbricks_2026} to create motion clips that transition between terrain segments and different velocities. This pairing of Motion Bricks and optimization is shown to provide low velocity tracking error and terrain consistent references.

\textbf{Generator RL Fine Tuning:} After using this data to train the tracker and generator we present an off-policy RL training loop for the generator that iteratively improves its results using its own closed loop rollouts, and a frozen tracker. Given the generator produces just under 3,000 values per plan (for a whole body trajectory), applying naive PPO on the outputs results in extremely inefficient search as independent gaussian noise on those actions does not lead to the structured trajectories that would create better references. In contrast, our method searches through the flow matching initial noise samples and by perturbing the conditioning values in the rollouts, allowing the generator to search other modes without adding noise to the actions. Using this diverse dataset from the rollouts we train a critic and compute advantage labels on the dataset. Using this advantage we perform advantage weighted regression \cite{peng_advantage-weighted_2019} to update the generator via supervised learning. This process is then repeated, resulting in the success rate increases of up to 25 percentage points and improving skill selection by up to 80 percentage points.  To the best of our knowledge, our work is the first time RL fine tuning on a perceptive generative model has been used for dynamic and terrain aware humanoid locomotion.

\textbf{Perceptive Inputs:} The tracker and generator are conditioned directly on the depth images rather than using a height map or learned representation; and no odometry is required. This makes deployment very easy and robust. We use two depth cameras which allow us to see objects before a downward facing camera would see them. This enables the robot to slow down upon approaching obstacles so it can traverse the terrain at the correct speed without needing the velocity command to change. We demonstrate that the robot retains accurate velocity tracking with only minimal deviations to traverse the terrain and how removal of the upper depth camera would cripple this capability. In our method the velocity command can be independent of the terrain and the robot will adjust as necessary. 

\textbf{Hardware Deployment:} We demonstrate this policy extensively on real humanoid robot hardware to walk, run, stand, jump and traverse stairs with ease (see Fig. \ref{fig:hero}). We do this both in lab settings and outdoors in real world environments, demonstrating real world capabilities rarely shown in other perceptive dynamic humanoid locomotion policies.

\section{Methods}
\begin{figure*}
    \centering
    \includegraphics[width=1.0\linewidth]{figures/architecture.pdf}
    \caption{Overview of the presented method. First, human data is translated into motion clips to make a motion library through optimization and generative models. Secondly, a perceptive tracking policy is trained using CLF-RL to track the motion library. Then the motion library and the tracking policy are used to gather depth images that the robot sees at a given state. These depth images are paired with the commanded velocity and the motion library to generate the data for the flow matching generator training. The generator is trained with supervised learning using the flow MSE loss. Then lastly, the generator is fine tuned using off policy RL. The generator and tracker are rolled out in simulation to gather data, which is fed into a buffer. A critic is trained to estimate the infinite horizon reward at each state and then that critic is used to assign advantage labels to the data. The generator is then updated using advantage weighted regression (AWR) to adjust its output distribution. Finally, this policy pair can be deployed on hardware to make the robot perceptively locomote in real world environments.}
    \label{fig:architecture}
    \vspace{-3mm}
\end{figure*}

\subsection{System Overview}
The presented method has four main parts as shown in Fig. \ref{fig:architecture}: (1) the data pipeline, (2) the tracker training, (3) the generator training, and (4) the RL fine tuning of the generator. After these steps two distinct policies are deployed: the fine tuned motion generator and the tracking policy. The tracker runs at 50 Hz as is common in RL control policies while the generator is only queried once every 0.24 seconds (i.e. every 12 control steps). The tracker takes in proprioceptive information and one camera depth image to compute actions while the generator takes in the desired velocity, its previous action, and two depth images to compute a 1.24 second long whole body trajectory. The data pipeline provides the terrain consistent reference motions for the tracker and generator to train on while the fine tuning improves the generator after it is already trained.

\subsection{Motion Clip Creation}
Our method requires pre-made terrain aware motion clips for the tracking policy. The details of their creation are given here. We start with human data from the bones-seed data set \cite{bones_studio_bones-seed_2026}. We use this data to create optimized human references similar to \cite{olkin_chasing_2026, esteban_shooting_2026}. 252 flat ground steady-state reference motions are generated including straight walking (forward and backwards), diagonal walking, turning walking, turning in place, side step walking, straight running and running while turning. This gives the robot a wide range of omni-directional locomotion capabilities. Then, additionally, 46 reference motions for jumping onto and off of boxes, as well as going up and down stairs are generated.

The optimization uses a multiple shooting state constrained formulation with MuJoCo \cite{todorov_mujoco_2012} as the dynamics back-end \cite{esteban_shooting_2026}. This design allows for small changes to the contact schedule by differentiating through the MuJoCo contact dynamics while still gaining the stability of the multiple shooting method. The state constraints afforded by the multiple shooting are critical as they allow us to enforce a periodic constraint (i.e. the final state is equal to the initial state) and an average velocity constraint which can adjust the reference motion to exactly the desired speed. 

The terrain based motions in the open source bones-seed dataset don't have the ground truth terrain information. Therefore we start by placing terrain to match the motion. Then we can use the same optimization as before but with the terrain placed in the MuJoCo scene. Once the initial solve is complete, with these default terrain parameters, we have ground truth contact information from the simulation. The ground truth contact information is gathered by thresholding the contact force on the relevant end effector bodies and then filtering out any short patches of contact/no contact. We found this to be more reliable than detecting contact by pure kinematics, and it is a benefit of using a dynamics optimization. With the ground truth contact the robot is kinematically retargeted onto a variety of terrain dimensions, which is especially useful since the dataset does not have every dimension we may see in the real world. To do the kinematic retargeting we compute foot offsets so that in contact the feet are placed on the terrain. Then we smoothly interpolate the offsets from one contact to the next to get a continuous deformation of the end effector trajectory and then we use inverse kinematics to get the joint angles. After the kinematic retargeting onto the other heights we can run the dynamic optimization again to get a feasible and smooth terrain aware motion.

In total, this optimization routine generates steady state flat ground locomotion at a variety of speeds as well as episodic and steady state terrain motions over a variety of geometries. Using these optimized motions, the next step is to generate the full motion clips that will be used by the tracker and generator trainings. These motion clips range anywhere from 6 to 20 seconds and are grouped by family: jump off, jump on, stairs down pure, stairs up pure, stairs down transition, stairs up transition, and flat. To generate these motion clips, velocity profiles are sampled and for each steady state velocity command, the closest steady state optimized reference is chosen. Then Motion Bricks \cite{wang_motionbricks_2026} is used to do motion in-betweening. This does the same work that something like motion matching \cite{wu_perceptive_2026} could do, but much faster and smoother.

10,000 motion clips are generated over 140 different tile geometries to create a single motion library. This allows the training to see many different velocity profiles over the same geometry and many different geometries.

\subsection{Tracker}
To create the tracker we train an RL policy that tracks the pre-made motions generated above. To accomplish this we utilize a variant of CLF-RL \cite{li_clf-rl_2026}, which we choose because it has shown to provide lower tracking error \cite{olkin_chasing_2026}. This variant of CLF-RL tracks body positions for all the links instead of the joint angles. We also introduce an additional scaling term to help balance the numerical properties without sacrificing the CLF properties (see the appendix for more details). The tracker rewards include the two CLF-RL tracking rewards, an undesired contact penalty, torque limit penalty, joint limit penalty, action rate penalty, and a torque penalty.

This is a perceptive policy and therefore takes in a depth camera scan which has self scanning. This scan is only 30x26 pixels and on hardware is constructed by down-sampling the 600p image. This follows a similar vein to \cite{compton_terrain_2026} which also uses raw depth scans that see its own links. The full observations are given in Table \ref{tab:observations}. The network is a feed forward MLP that encodes the reference and commanded velocity information which then feeds into another MLP which also takes the proprioceptive and depth inputs all with ELU activations. The reference and command MLP has hidden dimensions [1024, 512, 512] and the full MLP has hidden dimensions [1024, 512, 256, 128].

The policy runs at 50 Hz and outputs joint position targets that go into joint level PD controllers. The action rates and gains are derived from \cite{liao_beyondmimic_2025}. Domain randomization on physical values such as the center of mass, joint friction, ground restitution, and camera angles and offsets is used to ease sim-to-real transfer. Further, noise on the reference trajectory, depth camera readings, and proprioceptive inputs is also used. The RL tracking controller was trained in IsaacLab \cite{nvidia_isaac_2025} using 8192 environments on a single H100 for 20,000 iterations, lasting approximately 48 hours. PPO with asymmetric actor critic is used to train this policy with the implementation from RSL-RL \cite{schwarke_rsl-rl_2025}.

\begin{table}[]
    \centering
    \begin{tabular}{c|c|c}
    \toprule
        Term & Size & History \\
        \hline
        Joint angles & 29 & 10x \\
        Joint velocities & 29 & 10x \\
        Projected gravity & 3 & 10x \\
        Root angular velocity & 3 & 10x \\
        Previous action & 29 & 10x \\
        Lower depth camera & 780 & N/A\\
        Reference trajectory & 572 & N/A\\
        Velocity command & 39 & N/A\\
        \bottomrule
    \end{tabular}
    \caption{Tracker observation terms and their sizes. The terms with a history are stacked over that many steps.}
    \label{tab:observations}
    \vspace{-4mm}
\end{table}

\subsection{Generator}
The generator is a flow matching transformer that is auto-regressive in nature with the only sensor observation being the depth images. The depth images go through a CNN into tokens that feed into the transformer. The transformer also gets the velocity command and a trajectory history in the form of a subset of nodes of the previously generated trajectory. Each node of the history becomes a token. The flow matching head outputs the reference trajectory.

The transformer is structured so each of the CNN outputs per camera get their own token (56 per camera/112 total perceptive tokens). Each node in the trajectory history gets its own token, and the velocity commands are split across 13 tokens. There is full self attention on all of the conditioning/``prefix" tokens, but these tokens do not attend to the output trajectory tokens in the flow matching head. This means that the internal representation of the depth images, velocity command, and history are independent of the current flow matching velocity field output. This allows the transformer encoder to be used later in the RL fine tuning, and it also speeds up inference time as the prefix encoder only needs to be computed once then it can be cached and re-used at each integration step. The output tokens attend to the conditioning tokens. The perceptive tokens get positional embedding based on their angle in the camera field of view.

The inputs to the CNN from the depth camera have five channels: depth, $x$, $y$, and $z$ positions, and a valid binary. The depth is just the raw depth from the down sampled image. The three position coordinates are the positions of that pixel in 3D space relative to a frame at the root link of the robot in gravity aligned space and aligned with the yaw of the robot. So as the robot yaws, this frame moves but as it pitches or rolls the frame does not move. These coordinates are computed via forward kinematics to get the transform from the root link to the camera pose and then transforming the position of the depth pixel into the root frame. This gravity aligned root frame is the frame that the flow matching produces the root positions in. Given the lack of proprioception entering into the flow matching policy we found that these additional coordinate inputs were critical for getting the generator to work on terrain. Without using these position coordinates the robot cannot discern certain situations, such as if the robot is leaning towards terrain or if terrain is getting closer without the robot moving.

To collect data for the training we use the trained tracker policy and have it follow the pre-made motion clips it is trained on. We record the depth images and the reference trajectory before (for the history) and after that point (for the predicted trajectory). We choose to use the actual RL rollout to make the visited states of the depth camera as realistic as possible. Additionally, we hold a buffer of previous depth camera measurements so that we can train the generator to account for lag in the depth camera measurements by using an image out of the buffer instead of the most current image. After we have collected an initial dataset size of 200,000 points we now augment the data. We do this data augmentation to emphasize in the data how the reference must be terrain consistent. To do this augmentation, all the data points that have a trajectory that is fully or partially on terrain are used. At these data points we sample a random distance along the terrain and with a yaw offset and apply this to the state of the robot, then the depth image is re-casted from this spot and the reference stays the same. This shows the training that these axes must stay correct relative to the terrain even as the robot moves. We do 3 augmentation points per relevant data point.

During training, 20\% of the data points have their history replaced with a learned null embedding. This does two things: (1) it forces the policy to not just continue what the history was doing and instead it must learn the mapping from the velocity command and depth image to the output trajectory and (2) it gives us a learned parameter to use as the history for the first inference when there is no history to use yet. Also, in training we add noise to the history trajectory to make the policy more robust to its own outputs when it runs auto-regressively online. We find that the final trained policy is very robust and stable in its auto-regressive rollouts even though it is never trained to see its own outputs, it is only ever teacher forced.

The flow matching generator is trained for 100 epochs on an H100 GPU taking about 12 hours to complete. When we deploy the generator we use eight Euler integration steps. The model has eight layers with each layer having eight heads. The feed forward networks are size 2048 and $d_{\text{model}}$ is size 512. There are 26.8 M total learnable parameters.

\begin{figure}
    \centering
    \includegraphics[width=1.0\linewidth]{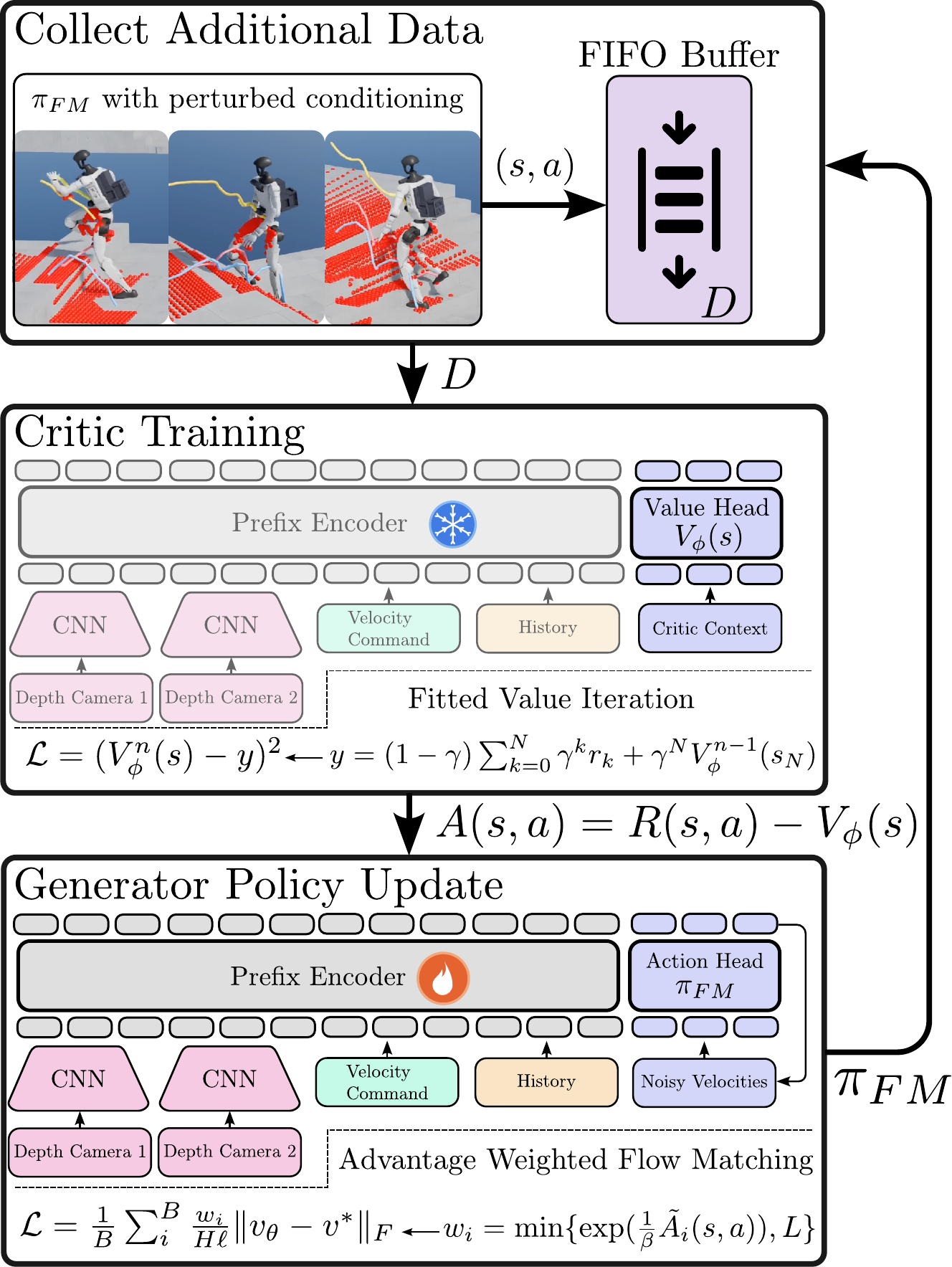}
    \caption{Generator RL fine tuning architecture. The overall process is depicted here: collect data, train the critic, and update the generator. The transformer architectures are also depicted here, showing the prefix encoder and how it is frozen in critic training and updated in the generator. A value head is used for the critic while a velocity field action head is used for the generator.}
    \label{fig:transformer_architectures}
    \vspace{-4mm}
\end{figure}

\subsection{RL Fine Tuning for the Generator}
After the initial training, the flow matching policy can be run in the loop with the tracker, yet it still struggles with out of distribution terrains and task compositions. To address these concerns we propose a simple yet effective method for RL fine tuning of the motion generator, not the tracker, as shown in Fig. \ref{fig:transformer_architectures}. The motion generator is formulated as a flow matching transformer that outputs almost 3,000 values (a whole body trajectory) each re-plan. Due to the fact that this is a flow matching model and the high dimensional output space we choose to use an RL algorithm that relies on supervised learning and where exploration doesn't enter through gaussian noise on the outputs. As we show in the results below, doing something like PPO on a residual policy is very data inefficient and difficult to explore with.

We utilize advantage labeling for advantage weighted regression (AWR) \cite{peng_advantage-weighted_2019} as an off policy RL algorithm that utilizes supervised learning. Therefore our algorithm has three main parts: (1) data collection, (2) critic learning and advantage labeling and (3) supervised learning of the flow matching policy. Importantly, rather than inserting gaussian noise on all the action outputs, which does not lead to efficient search for a 3,000 dimensional action space, we instead perturb the conditioning to allow the robot to see different possible actions, that are still reasonable, as well as spawning the robot into a variety of positions that it may not have achieved with the original policy, and using the noise on the initial sample in the flow matching. This lets us ``search" in a more structured manner. When we perturb the conditioning we still store the nominal conditioning so that the policy training sees that as a valid action at the state it was actually at. Since this is an off policy algorithm we don't need to collect too much data which is helpful as rollouts now need to simulate the physics, query the tracking policy, and run inference on the flow matching transformer for eight integration steps. This makes rollouts more expensive to collect compared to the tracker PPO learning. 

To give more details, we start by describing the RL formulation for the generator fine tuning problem at hand. Consider states $s_k \in \mathcal{S}$ and actions $a_k \in \mathcal{A}$ where $k$ denotes the re-plan step. Note that the steps in this RL environment are re-planning steps as the tracking policy is frozen. So from the perspective of this MDP, the tracker is effectively part of the dynamics of the robot and not something that can be adjusted. The state is all of the conditioning variables: the depth scan, reference history and velocity command, while the actions are the full trajectory and therefore $\mathcal{A} \subset \mathbb{R}^{H\times \ell}$ where $H$ are the number of nodes in the trajectory and $\ell$ is the dimension of a single node, so in our case $H = 62$ and $\ell = 44$ so that $H \times \ell = 2728$. The flow matching policy produces a sampled action given a state: $a_k \sim \pi_{FM}(a | s_k)$. Then we can define a step wise reward $r_k = r(s_k, a_k)$ as a function of the states and actions and define a sequence of state action pairs as $\tau = \{ (s_0, a_0), (s_1, a_1), .... \}$ which implicitly defines a corresponding set of rewards $r_k$.  The infinite horizon discounted reward is
\begin{equation}
    J_{\pi_{FM}}(s) = \sum_{k=0}^\infty \gamma^k r_k.
\end{equation}
The specific rewards used are given in Table \ref{tab:generator_rl_rewards}. These rewards encourage terrain consistency first and foremost. The terrain penetration penalizes bodies that intersect terrain and the terrain contact term penalizes bodies that should be in contact (as determined by a kinematic classifier) that are either too far into terrain or too far away from it. These rewards are inspired by the rewards from \cite{xu_parc_2025}. Additionally, we have velocity tracking and success rewards that are modulated by terrain consistency. Velocity tracking and success are therefore secondary rewards that only provide significant rewards when the terrain consistency is good. Given that deviating from the commanded velocity may be desired when traversing terrain, this allows us to reward both in a consistent way. The success is gated on the worst terrain consistency over the episode and therefore only provides a strong reward signal when every plan is terrain consistent.

\begin{table*}[]
    \centering
    \begin{tabular}{c|c}
        Term & Value \\
        \hline
        \hline
        Terrain Penetration ($r_{pen}$) & $\sum_{b=1}^{N_b} w_b \sum_{p \in P(b)} -\min \{\text{sdf}(p), 0\}$ \\
        Terrain Contact ($r_{con}$) & $\sum_{b = 1}^{N_b} c_b \min_{p \in P(b)} |\text{sdf}(p)|$ \\
        Terrain Consistency ($r_{terr}$) & $\exp(-(w_c r_{con} + w_p r_{pen}))$\\
        \hline
        Velocity Tracking ($r_{vel}$) & $\sin(\frac{\pi}{2} (r_{terr})^{s_{vel}}) \exp(-|v_x - v^{cmd}_x| / \sigma)$ \\
        Success ($r_{succ})$ & $\sin(\frac{\pi}{2} (\min_k{r_{terr}})^{s_{succ}}) \cdot C 1_{reached} $ \\
        \hline
        Total & $r_{terr} + r_{vel} + r_{succ}$
        
    \end{tabular}
    \caption{Rewards for the generator RL. These rewards encourage terrain consistency then velocity tracking and success secondarily. $N_b$ denotes the number of body links and $P(b)$ is the set of points on a given body. $\text{sdf}(p)$ denotes the distance using a signed distance field from terrain to that point. $s_{vel}$ and $s_{succ}$ are tuning terms to affect how sharply the terrain terms fall off.}
    \label{tab:generator_rl_rewards}
    \vspace{-4mm}
\end{table*}

To start training, we need to collect data. We do this by running the robot with the fixed motion tracker in IsaacLab where the flow matching policy is driving it. Critically, this allows us to abandon the pre-made motion clips at this point and fine tune on any terrain. We create a FIFO data buffer that holds $D$ data points. Each data point is a pair $(s, a)$ as described above. At the start we collect 2,000 episodes worth of data points (re-plans). Then each subsequent iteration we only collect 500 more episodes worth of data and insert them into the buffer, ejecting 500 episodes worth of the oldest data points.

Our method relies on utilizing advantage estimates to understand which state actions pairs are better. Therefore we fit a critic network to predict the infinite horizon discounted reward $J_{\pi_{FM}}(s)$. We train the critic through fitted value iteration (FVI) and we utilize the frozen prefix encoder from the original generator and thus only train a value head. For the first iteration of FVI we estimate the rewards through a Monte Carlo estimate with no bootstrapping: $V(s_i) \approx \sum_{k = i}^\infty r_k$ with terminations that truncate the rollout. If the rollout terminates due to a failure at step $k$ then all future rewards are set to 0. Otherwise if the robot doesn't fall then we still truncate the rollout but bootstrap a tail value that is just the reward of the last step. For all iterations of FVI at $n > 0$ we use a $N$-step lookahead and bootstrap with the previous value of $V$\footnote[2]{We regress the $(1-\gamma)$ scaled value so targets are in the reward scale.}:
\begin{equation}
    V^n(s) = (1 - \gamma) \sum_{k=0}^{N-1} \gamma^{k} r_k + \gamma^N V^{n-1}(s_{N}).
    \label{eq:fvi_update}
\end{equation}

After running FVI we use the critic for advantage labeling. The advantage is $A(s, a) = R(s,a) - V(s)$ where $R(s,a)$ is computed as the sampled return from a state. No critic is used in the computation of $R(s,a)$ as it unrolls the trajectory to termination with no bootstrapping so the true return is always used for advantage labeling. In practice we choose to use a normalized advantage $\tilde{A} = \frac{A}{\sigma_{A}}$ where $\sigma_A$ is the standard deviation of $A$ computed once per RL iteration. Then the advantage weight $\beta$, used below, can we written in terms of normalized units. We train two critics, each on half the data, and use them to label advantages on the other portion of the data to prevent potential issues with memorizing episodes given there are multiple data points with the same success or failure label in a given episode. 

Once we have the advantages we can label each data point and then weight the importance of each data point by its advantage: $w = \min\{\exp(\frac{1}{\beta}\tilde{A}(s,a)), L\}$ where $L$ is an upper bound on the weight. Then we use these weights for the flow matching MSE loss by uniformly sampling a time $t \in [0, 1]$ and sampling an initial noise draw $x_0 \sim \mathcal{N}(0, I)$ and interpolating to get $x_t = (1 - t)x_0 + tx_1$. The linear velocity field is given by $v^* = x_1 - x_0$ and the weighted flow matching loss is:
\begin{equation}
    \mathcal{L} = \frac{1}{B} \sum_i^B \frac{w_i}{H\ell}\|v_{\theta}(x_t^i, t_i, s_i) - v^*\|^2_F
    \label{eq:weighted_flow_loss}
\end{equation}
where $B$ is the batch size, and $\|\cdot\|_F$ is the Frobenius norm.

Additional data can be inserted into this process easily. We can choose to use data from the pre-training in the fine tuning by adding it into the flow matching update with weight one and not using it in the critic learning or advantage labeling. Therefore we can easily mitigate any potential issues with forgetting previous behaviors.

We do RL fine tuning of the generator for 5 iterations. In each iteration five epochs are used in the flow matching update. Training occurs on a single H100 GPU and lasts about six hours (although this can vary depending on the terrain, episode length, and other factors).

\subsection{Deployment}
To deploy the policies on the hardware we choose to run them on an NVIDIA Jetson Thor. Flow matching inference time is around 11 ms while the tracker is less than 1 ms. Although the generator is only queried once every 0.24 s, mitigating the delay between sensor input and new trajectory is still critical for dynamic motions. The two cameras are a Zed X mini and Zed X which both feed into a NVIDIA Jetson Orin for depth processing and down sampling before being sent over ROS2 to the Thor. In training we only simulate the down sampled pixels instead of simulating the whole camera and down sampling there. The Zed X sits towards the top of the torso and points mostly forward while the Zed X mini is on the lower part of the torso and sees the legs and the ground immediately under the robot. The Thor and Orin are both powered from the robot and a joystick is connected over Bluetooth for providing velocity commands.

\section{Results}
Given this architecture, we can deploy the policies on real robots and across simulations to study its performance. The effects of fine tuning, choice of algorithm, the velocity tracking and terrain traversal, using two cameras, and the data pipeline are studied quantitatively. Additionally, hardware experiments on the Unitree G1 are run to demonstrate the effectiveness of the policy on a real robot outdoors and indoors in real world environments.

\subsection{RL Fine Tuning}
\begin{figure*}
    \centering
    \includegraphics[width=1.0\linewidth]{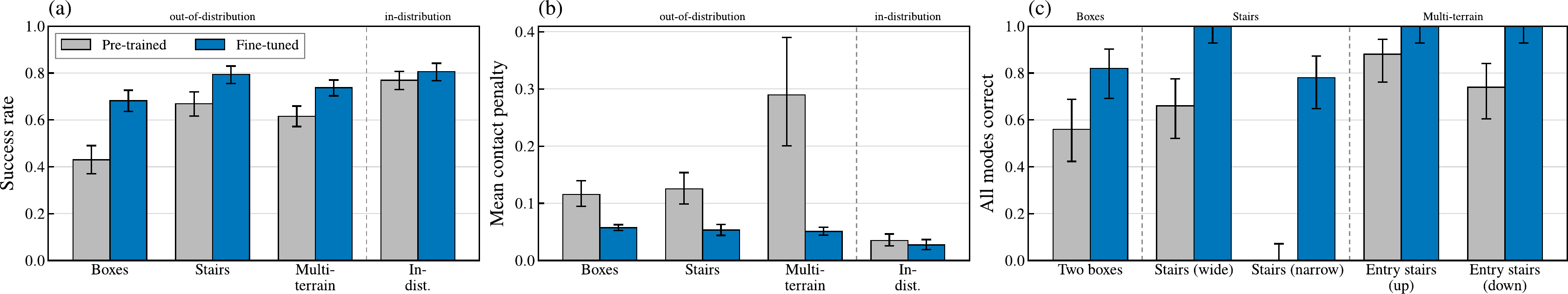}
    \caption{Effects of using the RL fine tuning for both out of distribution and in distribution terrains. The out of distribution terrains vary in both the task being accomplished and the terrain. The success rate is shown in (a) where a success is counted as having reached the goal area without falling and within the given time. The goal regions are across the terrain. Plot (b) shows the mean contact penalty so a lower number is better, and therefore the fine tuned policies are more terrain consistent than the pre-trained policies. Finally (c) shows what fraction of sampled episodes get every mode (skill) correct. For example, for the two box terrain this includes two jump ups and two jump downs. The fine tuned policies drastically improve mode selection: on one stairs terrain it increases from 0\% correct to about 80\% correct. 95\% confidence intervals are given in each plot as error bars.}
    \label{fig:fine_tuning_improvements}
    \vspace{-5mm}
\end{figure*}

The generator RL fine tuning allows us to improve and adapt the generator to out of distribution environments or tasks, such as having additional walls in view, or task compositions like jumping on then off of a box. To test the effects of the RL fine tuning we created 4 distinct out of distribution fine tuning configurations: (1) boxes, which has 1 - 2 boxes of varying widths and heights, (2) stairs that go up then down with varying widths, 1-2 walls flanking it, a start and end platform, and a central platform between an up and down set of stairs, (3) a multi-terrain setup that includes terrains from the boxes, stairs, some in distribution terrains, and a realistic building entry way that includes two stair cases of varying dimensions, walls, and a landing between the stair cases, and (4) an in-distribution set of terrain that were used for the original generator and tracker training. The first three of these terrains are out-of-distribution for a number of reasons including that the policy was never trained to do a box jump up and down in a single episode, box widths were varied, walls were added around the stairs, and entirely new terrains, like the entry way, were added. 

Fig. \ref{fig:fine_tuning_improvements} shows the improvements gained from the RL fine tuning. The success rates increased across the board by as much as 25 percentage points while the mean contact penalty decreased, showing better agreement between the terrain and the trajectory, and finally, the choice of ``mode" significantly improved for each terrain type too. The mode here is referring to the skill used to traverse the terrain, e.g. using a stairs trajectory or box jump trajectory. The choice of mode is determined by rolling out 50 samples per policy and per terrain. Then the rollouts were randomly shuffled together to prevent bias and a human manually labeled each of the modes chosen, and if the human could not determine the type of mode taken it was counted as incorrect. This demonstrates the capabilities of the generator RL to shift the output distribution to the preferred mode, increase success rates, and increase terrain consistency. Adjusting this choice of mode is not something that can be achieved with fine tuning the tracker policy, making this a complementary tool. 

The multi-terrain fine tuning was the policy that was deployed on hardware. The multi-terrain fine tuning shows the ability of the fine tuning to work across terrain types and skills. This fine tuning also included data from the original pre-training as discussed above. This allows us to both train on new environments while also feeding in the original data making sure the generator doesn't forget other motions.

\subsection{Algorithm Comparison}
\begin{figure}
    \centering
    \includegraphics[width=1.0\linewidth]{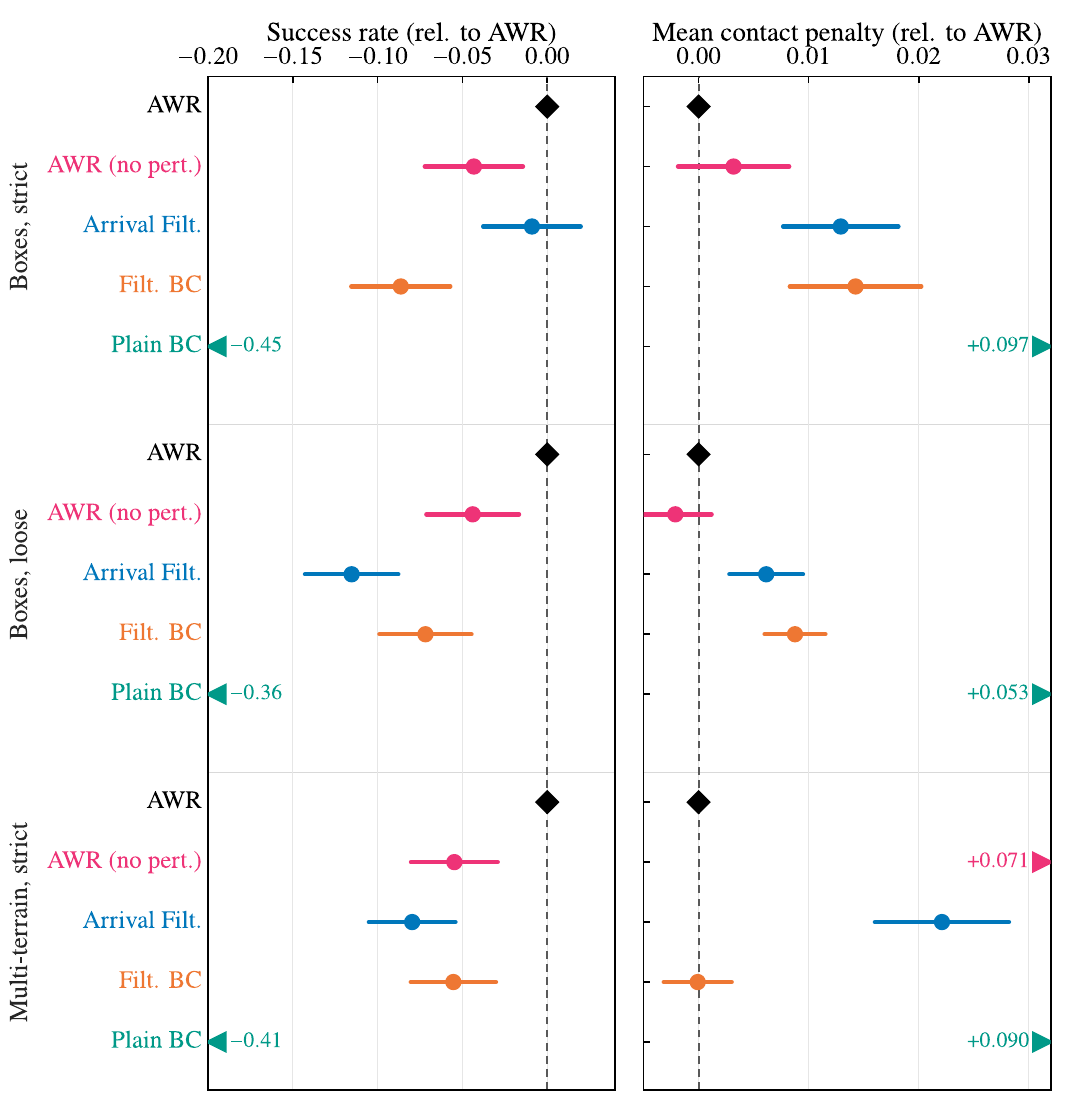}
    \caption{Comparison between different fine tuning algorithms and search methods for the generator relative to using AWR as described above. The methods are compared across terrains and across a strict and loose termination for the boxes. The AWR algorithm performs equivalently or better than each other algorithm in both metrics. The 95\% confidence intervals are given in the plot.}
    \label{fig:finetuning_algo_comp}
    \vspace{-3mm}
\end{figure}

Having shown the importance of fine tuning the pre-trained policy we now study our algorithmic choices. Our method updates the policy via supervised regression on off-policy rollouts using advantage weighting (AWR). We compare to four other methods: filtered behavior cloning (filt. BC) \cite{emmons_rvs_2022}, arrival filtering (keeping all the data in rollouts that reach the goal region), plain BC/self distillation, and AWR without the perturbed conditioning (AWR no pert.). The plain BC removes the advantage information entirely and asks if having any data on this new terrain is helpful even in the absence of a critic or selection mechanism. The arrival filtering and filtered BC investigate how the choice of selection method affects the results. The removal of the perturbed conditioning isolates the effect of that specific search methodology.

Fig. \ref{fig:finetuning_algo_comp} shows that AWR consistently matches or out performs all the other algorithmic choices both in success rate and terrain consistency. Plain BC fails badly, demonstrating that some form of advantage or selection mechanism is critical. The other two selection variants both generally perform worse than AWR in both success rate and terrain consistency. We find that the arrival filtering is much more sensitive to the choice of termination. Switching from the strict termination (i.e. terminating more than 12 cm from the generated trajectory) to the loose termination (30 cm termination bound) we find that the arrival filtering loses about 10 percentage points of its success rate relative to AWR while the filtered BC is consistently 5-10 percentage points behind AWR. By having terminations based on tracking performance the arrival filtering can implicitly improve the terrain consistency metric, which is why it is not as bad as plain BC, but it still cannot match the AWR performance. Removing the conditioning perturbations reduces success by 4-6 percentage points. The contact penalty is also much higher on the multi-terrain with consistently worse performance on stair terrains. Overall, the learned advantage function, the advantage-weighted updates, and the conditioning perturbations each contribute to performance, consistently improving success rates and, in most settings, terrain consistency, which motivates our use of this policy on hardware.

We further compare with using PPO to train a residual policy that takes as observation the generator output and its observations, and outputs a residual adjustment to the trajectory. This residual policy has an action space the size of the full generator output. If the residual worked in theory this could be used to distill back into the original generator. This lets us study how our algorithm compares to a residual fine tuning method using PPO, which is on-policy. The PPO baseline uses the same rewards as the off policy algorithm. We find that the PPO method is significantly less data efficient: we stopped training after 2,000 iterations at which point it had used more than 160x the data of the off policy algorithm and took more than 30x the wall clock time. At this point, the PPO baseline achieves a success rate around 13 percentage points below the AWR's. Although it may still continue to trend upwards, it would require substantially more data and compute time. Therefore we conclude that an off policy algorithm like AWR is the better approach for RL fine tuning in this context.

\subsection{Velocity Tracking and Terrain Traversal}
\begin{figure}
    \centering
    \includegraphics[width=1.0\linewidth]{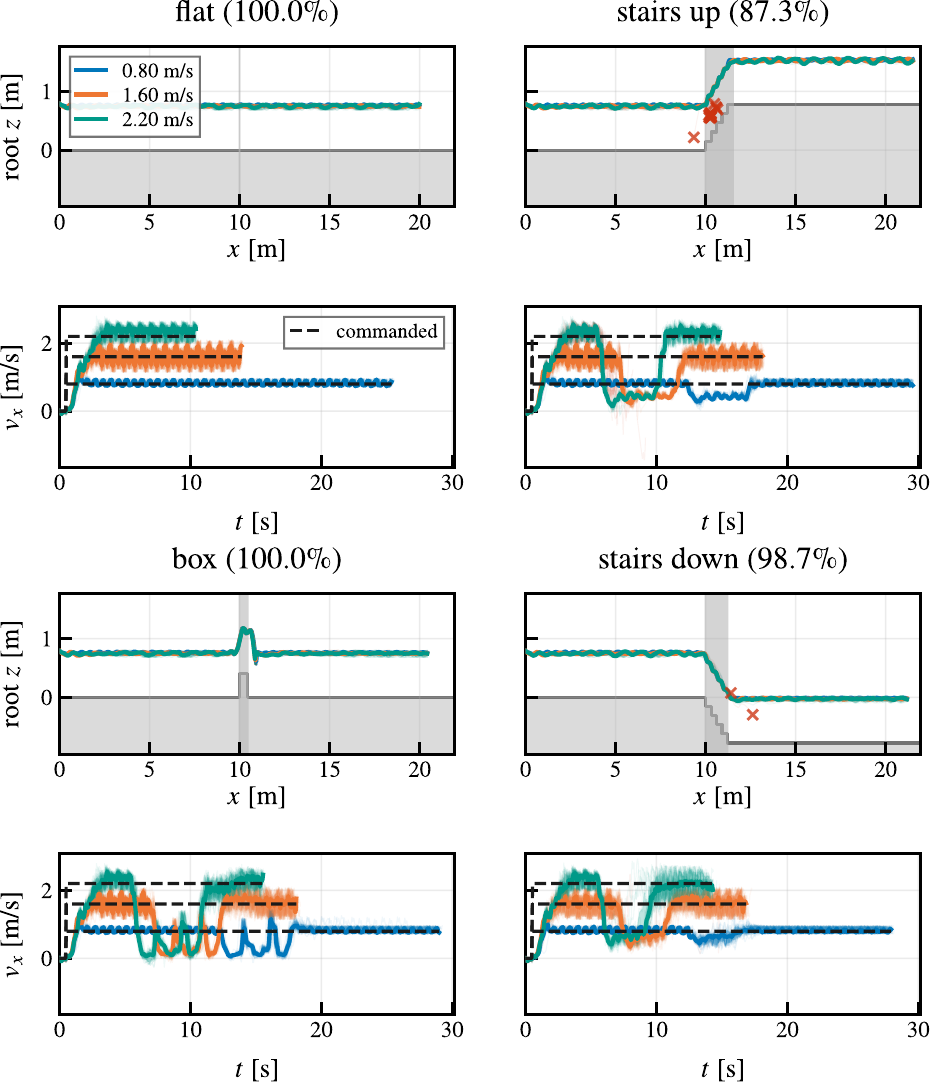}
    \caption{Demonstration of the terrain traversal capabilities of the policies and their effect on the robot's velocities when using a fine tuned policy. The robot autonomously slows down upon approaching terrain so that it can robustly traverse it. In general the policy shows accurate speed tracking and reliable terrain traversal. The percentages are measuring the successful trials over the 150 samples per terrain.}
    \label{fig:vel_changes}
    \vspace{-5mm}
\end{figure}

Using the fine tuned policy we can see how well the robot tracks velocities before, after, and during terrain traversals. It should be noted that the terrain traversals cannot happen at any speed. We don't have a stairs sprinting trajectory and we don't have the ability to jump onto a box at fast speeds either. Therefore, it is expected and desired for the robot to slow itself down. Yet, having to command the correct velocity to traverse an obstacle is undesired. Therefore the policy can use its depth cameras to autonomously adjust its speed to meet the needs of the terrain. Fig. \ref{fig:vel_changes} demonstrates how the policy pair leads to accurate velocity tracking at a variety of speeds and how it can slow down to the necessary speed to traverse an obstacle and then go back to the desired speed. This figure also shows how the fine tuned policy does not lose its ability to track velocities, even with the perturbed conditioning during training. 50 rollouts at each speed in MuJoCo were used to evaluate the policy each with a different random seed for the flow matching sampling. Therefore this also validates the sim-to-sim performance of the policy.

\subsection{Hardware Deployment}
\begin{figure*}
    \centering
    \includegraphics[width=1.0\linewidth]{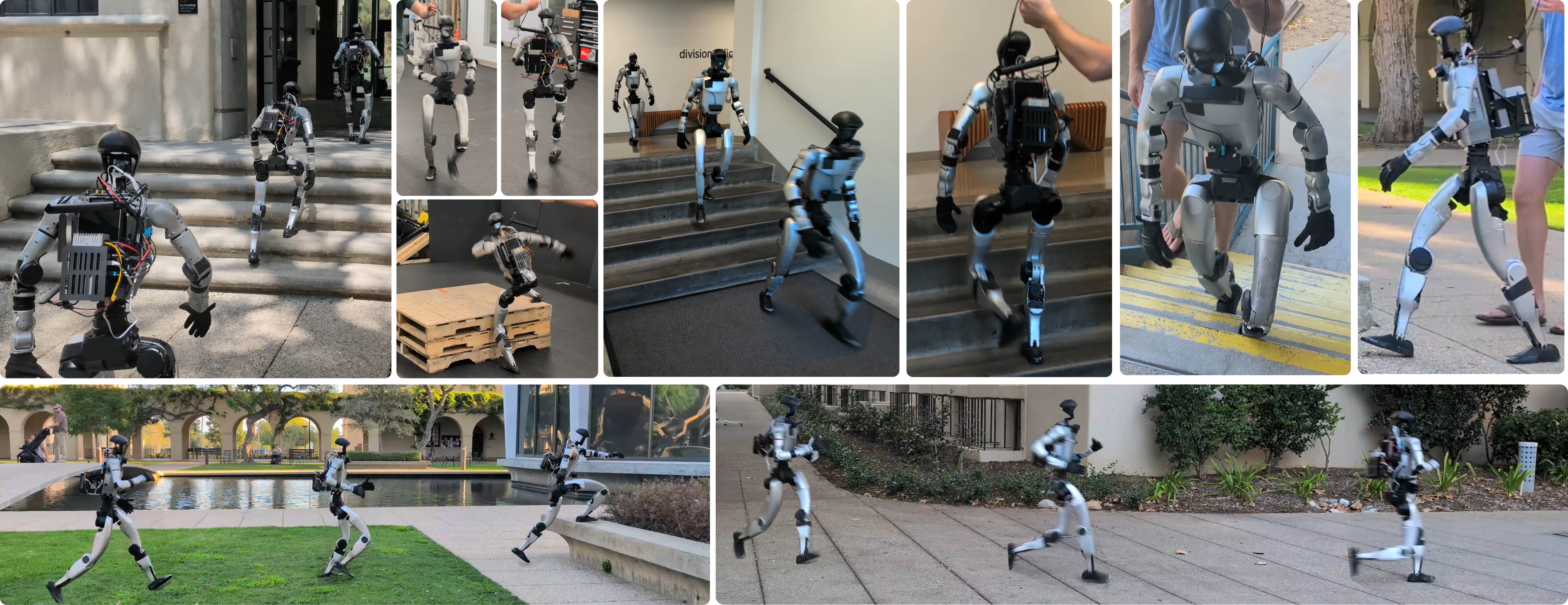}
    \caption{Demonstration of the policy pair working in a variety of conditions. We demonstrate traversing a number of different stair cases that have unique geometry indicating the policy's capability to adjust to different terrain. Then we also show running and walking of the robot on flat ground both in straight lines and while turning. We also demonstrate the robot jumping onto a variety of platforms including from a running start.}
    \label{fig:hardware_lower}
    \vspace{-4mm}
\end{figure*}

Figs. \ref{fig:hero} and \ref{fig:hardware_lower} demonstrate the robot working in both outdoor and indoor settings. By using the raw depth camera readings the transfer to the outdoors is easy and no adjustments are needed relative to the inside. We see the robot climb 15 consecutive steps without issue demonstrating the robustness and precision of the policies. The robot can locomote over many different real-world stair cases: 6 different real-world stair cases were successfully traversed, each with different geometry. The robot can locomote both down and up stairs. The robot can walk and run both indoors and outdoors as well as transition from fast and slow speeds onto the terrain. In Fig. \ref{fig:hardware_lower} the bottom right image demonstrates the robot running at 2 m/s while turning on a path. Lastly the robot can jump onto and off of boxes. We have tested approaching stairs at a variety of speeds on hardware including at 0.8 m/s, 1.5 m/s and 2.0 m/s while successfully transitioning onto the terrain. On the hardware, the resulting motions are smooth, natural, and relatively quiet, especially on terrain like stairs where other methods may slam their feet into the ground. The hardware experiments also validate that the multi-camera policy can be deployed in real world environments with walls and trees and bushes without needing to simulate every possible perceptive condition. Our results demonstrate that this policy can be deployed to control a humanoid robot in real world environments for dynamic and terrain aware multi-skill locomotion.

\subsection{Data Pipeline}
\begin{table}[t]
    \centering
{\scriptsize
    \begin{tabular}{c|c|c|c|c|c|c}
    \toprule
    & \multicolumn{2}{c|}{Velocity RMSE (m/s)}\\
Vel. Command ($v_x$) & Ours [95\% CI] & Motion Bricks [95\% CI]\\
\hline
  (-1.0, -0.2]  &   \underline{0.484 [0.357, 0.598]} &  \underline{0.462 [0.346, 0.562]}\\
  (-0.2, +1.0]  &   \bf{0.230 [0.191, 0.271]} &  0.368 [0.308, 0.434] \\
  (+1.0, +2.5]  &   \bf{0.442 [0.312, 0.563]} &  0.896 [0.813, 0.976] \\
  \bottomrule
    \end{tabular}
    }
    \caption{Comparison between different motion clip generation methods on flat ground. Entries that are statistically significant improvements are bold and entries that have CI bounds overlapping other means are underlined.}
    \label{tab:vel_tracking_clips}
    \vspace{-4mm}
\end{table}
Next, we can study the properties of our data pipeline that feeds into the tracker and the generator. Table \ref{tab:vel_tracking_clips} shows that our optimized reference trajectories in conjunction with Motion Bricks achieve superior velocity tracking to that of Motion Bricks alone. Achieving accurate velocity tracking is critical as these motion clips upper bound the velocity tracking performance of the final policies since they are trained to track these clips. Therefore any lost tracking capabilities here can not be recovered with the current pipeline. Although just using Motion Bricks would certainly be the lower effort option, it would lead to significantly worse tracking performance. We also investigated the use of motion matching for the generation of offline clips and found that it was able to achieve good velocity tracking when given access to the same reference motions and dataset but the motion matching took more than 7x the time to compute a given clip, which translated to hours for the full 10,000 motion clip library, making it not a desirable path forward. Additionally, the motion matching produced higher jerk trajectories resulting in motions that appeared to be more jittery.

\subsection{Two Camera Ablation}
 \begin{table}[]
    \centering
    \begin{tabular}{c|c|c}
    \toprule
        Terrain Type & Both Cameras & Lower Only  \\
        \hline
        Flat & 100\% & 100\% \\
        Box & \textbf{100 \%} & 57.3 \% \\
        Stairs Up & \textbf{82.7\%} & 71.3\%  \\
        Stairs Down & \textbf{97.3\%} & 88.0 \% \\
    \bottomrule
    \end{tabular}
    \caption{Comparison showing how the upper camera is affecting the ability to traverse in-distribution terrain. Removing the upper camera hurts the ability to traverse a given terrain by up to 43\% as demonstrated with the box terrain. The upper camera is critical for seeing the object ahead of time so the policy can modulate its speed early.}
    \label{tab:camera_ablation}
    \vspace{-5mm}
\end{table}

We study the effect of having two cameras given that many other policies use only a single camera. The upper camera is pointed forward to allow the robot to see what is coming while the lower camera is pointed down to see the terrain immediately under its feet. Previous results have used only the downward facing camera \cite{compton_terrain_2026, zhang_rpl_2026} but our work requires the robot to adjust its velocity before it reaches the terrain since the robot can travel at faster speeds (up to 2.5 m/s). We demonstrate that the upper camera is a critical part of the architecture to enable this velocity adjustment by training a generator with and without the upper camera. In this experiment we choose to only use the original pre-trained generator to reduce the number of confounding variables. Table \ref{tab:camera_ablation} shows that there is a significant decrease in capabilities with success rates dropping by up to 43\% with the upper camera removed. Looking at the trajectories for the box terrain confirms that the vast majority of these failures are caused by running into the box, and the success rates drop with increased speed.

\section{Conclusion}
\label{sec:conclusion}
We presented Generate, Track, Improve: a full pipeline for human data optimization, motion clip creation, tracker training, motion generation, and finally an off policy RL loop for the generator. The resulting policy pair enables a humanoid robot to achieve multiple skills such as walking, running, standing, jumping on and off of boxes, and traversing stairs, all in the real world with perception in the loop. The policies take in raw depth images from multiple cameras which allow the robot to see the current obstacles and what is coming. The RL fine tuning loop with AWR improves the mode selection by up to 80 percentage points, and success rate by up to 25 percentage points. This allows the policy to adapt to out of distribution tasks and geometries using a simple but effective method. 

The policies were demonstrated on real humanoid robots both in lab settings and outdoors. The policies create motions that are human like, dynamic, and robust enough for deployment. The robot is able to traverse long real world stair cases, jump on, walk across, and jump off of boxes, walk and run. The resulting policies showed a few different skills but can be easily extended to many more skills, making this architecture amenable to general purpose humanoids. 

\subsection{Limitations}
Hand designed rewards are used in the RL fine tuning, and although they are quite general, may not extend to every possible motion. Using only depth information limits the semantic information that a policy like this can use. For example, it is possible that there are box shaped objects we do not want to jump on and currently we have no way to encode that. The data pipeline still required some hand tuning of transition hyperparameters and choosing the human data reference motions for the optimization is still done via either a heuristic or a human. Both of these data pipeline limitations need to be addressed before the method can truly scale to any motion. 

\subsection{Future Work}
Improving the search in the generator RL could not only improve the quality of the motions but it could also lead to new motions being generated, similar to the work in PARC \cite{xu_parc_2025}. But finding an efficient and structured way to search will be critical. Our proposed method could also be extended to learn from real-world hardware data as well, allowing it to improve from real deployments. To generate terrain aware motions using a mimic style pipeline at scale we generally use reference motions from human data, but, the data we use does not have ground truth terrain information right now and including objects and terrain in the dataset could make the problem much more scalable. Developing better methods for transitions and new models to generate terrain aware motions offline will be critical here. Additionally, folding these controllers into a navigation autonomy stack and threading in safety/obstacle avoidance will be necessary for further real world deployment.

\bibliographystyle{IEEEtran}
\bibliography{IEEEabrv, generator_rl}

\section{Appendix}
\subsection{CLF-RL}
We use CLF-RL \cite{li_clf-rl_2026} for our motion tracking rewards. A CLF nominally up-weights tracking of velocity terms but we found in practice that since the velocity can be higher variance than the position this is not always desirable. Therefore we choose to insert a scaling term $S$ that re-scales the CLF weights without violating its theoretical properties.

\begin{proposition}
Consider the outputs $\eta$ ordered so that $A_\eta = \mathrm{blkdiag}(A_1,\dots,A_n)$, $B_\eta = \mathrm{blkdiag}(B_1,\dots,B_n)$ with
$A_i = \begin{bmatrix}0&1\\0&0\end{bmatrix}$, $B_i = \begin{bmatrix}0\\1\end{bmatrix}$.
Further, let $Q$ and $R$ be diagonal and positive definite, and let $P$ be the stabilizing solution of the CARE for
$(A_\eta, B_\eta, Q, R)$. Let $S = \mathrm{diag}(a_1, b_1, \dots, a_n, b_n)$ with
$a_i, b_i > 0$. Under the assumptions of Prop. 1 of \cite{li_clf-rl_2026},
$\bar V(\eta) = \eta^\top S P S \eta$ is an exponentially stabilizing CLF.
\end{proposition}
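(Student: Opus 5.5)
The plan is to reduce everything to the linear output dynamics $\dot\eta = A_\eta \eta + B_\eta \mu$ and show that $\bar P := SPS$ defines a quadratic CLF for it. The assumptions of Prop.~1 of \cite{li_clf-rl_2026} are exactly what transfers such a CLF to the full (feedback-linearized) robot model. That proposition uses the CARE solution $P$ only through two facts: $P \succ 0$, and the existence of a virtual input $\mu$ with $\dot V \le -c V$. So it suffices to establish these two properties for $\bar P$.

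First, decouple. Since $A_\eta$, $B_\eta$ are block diagonal and $Q$, $R$ are diagonal, the CARE splits into $n$ independent $2\times 2$ CAREs for the double integrators $(A_i, B_i, Q_i, R_i)$. By uniqueness of the stabilizing solution, $P = \mathrm{blkdiag}(P_1,\dots,P_n)$. For $Q_i = \mathrm{diag}(q_{i1}, q_{i2})$ and $R_i = r_i$, I will solve the $2\times2$ CARE in closed form. This gives $p_{12} = \sqrt{q_{i1} r_i} > 0$, $p_{22} = \sqrt{r_i(q_{i2} + 2p_{12})} > 0$ and $p_{11} = p_{12}p_{22}/r_i > 0$, with $P_i \succ 0$.

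Since $S$ is diagonal with blocks $S_i = \mathrm{diag}(a_i,b_i)$, we get $\bar P = \mathrm{blkdiag}(S_iP_iS_i)$. Each block $\bar P_i = \begin{bmatrix} m_1 & m_2\\ m_2 & m_3\end{bmatrix}$ has $m_1 = a_i^2 p_{11}$, $m_2 = a_i b_i p_{12} > 0$ and $m_3 = b_i^2 p_{22}$. It is positive definite because $S_i$ is a nonsingular congruence. In particular $\bar V$ is positive definite and quadratically bounded: $\lambda_{\min}(\bar P)\|\eta\|^2 \le \bar V \le \lambda_{\max}(\bar P)\|\eta\|^2$.

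Second, and this is the main step, show the decrease condition blockwise. We have $\dot{\bar V}_i = 2\eta_i^\top \bar P_i A_i \eta_i + 2\eta_i^\top \bar P_i B_i \mu_i$. The input can make this arbitrarily negative except on the set where $B_i^\top \bar P_i \eta_i = m_2\eta_{i1} + m_3\eta_{i2} = 0$. On that set, substituting $\eta_{i2} = -(m_2/m_3)\eta_{i1}$ gives
\begin{equation*}
2\eta_i^\top \bar P_i A_i\eta_i = 2\eta_{i2}(m_1\eta_{i1}+m_2\eta_{i2}) = -\frac{2m_2\det\bar P_i}{m_3^2}\,\eta_{i1}^2 ,
\end{equation*}
which is strictly negative for $\eta_i\neq 0$. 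Here the sign of $m_2$ is decisive: it is preserved precisely because $a_ib_i>0$. Both $\eta_i^\top(\bar P_iA_i + A_i^\top \bar P_i)\eta_i$ and $\bar V_i$ are quadratic forms. By homogeneity and compactness of the unit sphere intersected with this hyperplane, there is $c_i>0$ such that the drift satisfies $\eta_i^\top(\bar P_iA_i + A_i^\top \bar P_i)\eta_i \le -c_i\,\eta_i^\top \bar P_i\eta_i$ whenever $B_i^\top\bar P_i\eta_i = 0$.

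Finsler's lemma then yields $\kappa_i$ with $\bar P_iA_i + A_i^\top\bar P_i + c_i\bar P_i - \kappa_i \bar P_iB_iB_i^\top\bar P_i \preceq 0$. The feedback $\mu_i = -\tfrac{\kappa_i}{2}B_i^\top\bar P_i\eta_i$ therefore gives $\dot{\bar V}_i \le -c_i \bar V_i$. Summing over blocks with $c = \min_i c_i$ gives $\inf_\mu \dot{\bar V} \le -c\bar V$.

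Finally, I will invoke Prop.~1 of \cite{li_clf-rl_2026} with $\bar P$ in place of $P$. That result maps a virtual input $\mu$ on the output dynamics to an admissible torque through the feedback-linearizing relation, under its assumptions of full-rank decoupling matrix and appropriate input bounds. Hence $\bar V$ is an exponentially stabilizing CLF for the closed-loop outputs.

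The only delicate point is the Finsler/compactness step. It fails exactly if the off-diagonal term of some block changes sign or vanishes, which is why the explicit CARE solution and $a_i,b_i>0$ are needed.
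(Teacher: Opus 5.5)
Your proposal is correct but takes a different route from the paper. The paper never computes $P_i$ and never checks a decrease condition. It uses $S_iA_iS_i^{-1}=c_iA_i$ and $S_i^{-1}B_i=b_i^{-1}B_i$, with $c_i=a_i/b_i$, to show that $\bar P_i=S_iP_iS_i$ itself solves the CARE for $(A_i,B_i)$ with re-weighted weights $\bar Q_i=c_iS_iQ_iS_i$ and $\bar r_i=b_i^3r_i/a_i$, which are still diagonal and positive definite. Since $\bar P_i\succ0$, it is the stabilizing solution, and Prop.~1 of \cite{li_clf-rl_2026} applies verbatim with $(Q,R)$ replaced by $(\bar Q,\bar R)$.

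What the paper's reduction buys is a black-box use of the cited result. Nothing has to be assumed about which features of the CARE solution that proof exploits, such as the LQR gain $-\bar R^{-1}B_\eta^\top\bar P\eta$ or the rate $\lambda_{\min}(\bar Q)/\lambda_{\max}(\bar P)$. It also gives the interpretation that rescaling by $S$ is just a change of LQR weights.

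Your route instead verifies the CLF inequalities for the output dynamics directly, using the explicit $2\times2$ CARE solution, the sign of the off-diagonal entry, and Finsler's lemma. This is self-contained and proves more: any block-diagonal $\bar P\succ0$ whose $2\times2$ blocks have a positive off-diagonal entry works, not only those of the form $SPS$. The price is your opening assertion that Prop.~1 uses $P$ only through $P\succ0$ and $\inf_\mu\dot V\le -cV$. That is a claim about the cited proof rather than its statement. It is plausible for standard RES-CLF arguments, but the paper's reduction avoids needing it.

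One technical point to tighten: Finsler's lemma needs the strict inequality on $\{B_i^\top\bar P_i\eta_i=0\}\setminus\{0\}$. On that line the drift equals exactly $-(2m_2/m_3)\bar V_i$, so you must take $c_i<2m_2/m_3$. At the sharp value $c_i=2m_2/m_3$, no $\kappa_i$ makes the matrix negative semidefinite for your $\bar P_i$, because the non-strict version of Finsler's lemma fails here.
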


\begin{proof}
Since $A_\eta$, $B_\eta$, $Q$, $R$ are block diagonal by output, the CARE decouples and
$P = \mathrm{blkdiag}(P_1,\dots,P_n)$, where $P_i$ solves
\begin{equation}
  A_i^\top P_i + P_i A_i - P_i B_i r_i^{-1} B_i^\top P_i + Q_i = 0,
  \label{eq:care_i}
\end{equation}
Note $SPS = \mathrm{blkdiag}(S_i P_i S_i)$ with $S_i = \mathrm{diag}(a_i, b_i)$, so it
suffices to examine one output. Let $c_i = a_i/b_i$. Then
$S_i A_i S_i^{-1} = c_i A_i$ and $S_i^{-1} B_i = B_i b_i^{-1}$, and since $S_i$ is
symmetric, $S_i^{-1} A_i^\top S_i = c_i A_i^\top$. With $\bar P_i = S_i P_i S_i$,
\begin{align}
  A_i^\top \bar P_i + \bar P_i A_i
  &= c_i\, S_i (A_i^\top P_i + P_i A_i) S_i \\
  &= \bar P_i B_i \big(c_i^{-1} b_i^2 r_i\big)^{-1} B_i^\top \bar P_i - c_i\, S_i Q_i S_i ,
\end{align}
using \eqref{eq:care_i}. Hence $\bar P_i$ solves the CARE for $(A_i, B_i, \bar Q_i, \bar r_i)$ with
\begin{equation}
  \bar Q_i = c_i S_i Q_i S_i 
  \qquad
  \bar r_i = \tfrac{b_i^3}{a_i}\, r_i > 0 .
\end{equation}
Because $\bar Q_i \succ 0$, the CARE has a positive definite solution, which is
stabilizing; as $\bar P_i \succ 0$, it is that solution. Thus $SPS$ is the CARE solution for
$(A_\eta, B_\eta, \bar Q, \bar R)$ with $\bar Q = \mathrm{blkdiag}(\bar Q_i)$,
$\bar R = \mathrm{diag}(\bar r_i)$, and Prop.~1 of \cite{li_clf-rl_2026} applies with
$(Q,R)$ replaced by $(\bar Q, \bar R)$.
\end{proof}

\end{document}